\documentclass[preprint,journal]{vgtc}            

\vgtccategory{Data Transformations}

\title{When One Point Is Not Enough: Addressing Ambiguous Instances in Dimensionality Reduction by Splitting}

\author{%
  \authororcid{Diede P.M.\ van der Hoorn}{0000-0002-6348-1142},
  \authororcid{Alessio Arleo}{0000-0003-2008-3651}, and 
  \authororcid{Fernando V.\ Paulovich}{0000-0002-2316-760X}
}

\authorfooter{
  \item
  	Diede van der Hoorn, Alessio Arleo and Fernando Paulovich are with Eindhoven University of Technology.
  	E-mail: \{d.p.m.v.d.hoorn|a.arleo|f.paulovich\}@tue.nl.
}

\abstract{%
Dimensionality Reduction (DR) methods are widely used to visualize high-dimensional data. One key task in DR-based analysis is discovering neighborhoods, which relies on analyzing the fine-grained local structure of a projection. However, DR is an inherently lossy process; no technique can perfectly preserve the high-dimensional relationships, and projections therefore contain visual artifacts. In this paper, we highlight a typically overlooked source of visual artifacts: \textit{ambiguous instances}. These are instances that are highly similar to multiple mutually dissimilar neighborhoods in the high-dimensional space.
Standard DR methods cannot faithfully project such instances, since each data instance is mapped to a single point in the visual space. As a result, such an instance is placed in only one of its neighborhoods (or in none at all), so only part of its neighborhood structure is represented. We call this distortion \textit{partial neighborhood embedding}. In this paper, we introduce a graph-based approach that identifies ambiguous instances and replicates them as multiple points in the projection, placing each copy within its respective neighborhood. We use UMAP for our results, but our approach also generalizes to other local graph-based DR techniques. We show that our approach reveals previously hidden neighborhood memberships in projections and reduces partial neighborhood embedding across multiple examples, and is further supported by quantitative analyses. 
}

\keywords{Dimensionality Reduction, Ambiguity, Partial Neighborhood Embedding}

\teaser{
  \centering
  \includegraphics[width=\linewidth, alt={A view of a city with buildings peeking out of the clouds.}]{figs/teaser_darker.png}
  \caption{%
   We project the latent space of a CNN classifier trained on the Street View House Numbers dataset to study misclassifications (denoted in red). In a standard UMAP projection (left), several \textit{ambiguous instances} (denoted by crosses) suffer from \textit{partial neighborhood embedding}. Using our approach (right), these instances are split, revealing membership of multiple neighborhoods. For example, the highlighted instance with true label ‘1’ is misclassified as ‘7’, and appears only among sevens in the standard projection, but our approach also shows its similarity to its true class. \label{fig:svhn}}%
  }

\graphicspath{{figs/}{figures/}{pictures/}{images/}{./}} 

\usepackage{tabu}                      
\usepackage{booktabs}                  
\usepackage{lipsum}                    
\usepackage{mwe}                       
\usepackage{ccicons}                   

\usepackage{mathptmx}                  
\usepackage{amsmath}
\usepackage{amssymb}
\usepackage{amsthm}
\usepackage{mathrsfs}
\usepackage{todonotes}

\newtheorem{definition}{Definition}[section]
\newtheorem{theorem}{Theorem}[section]
\newtheorem{lemma}[theorem]{Lemma}

\usepackage{xcolor}
\colorlet{newtextcolor}{black}

\begin{document}
\crefname{definition}{Def.}{Defs.}
\Crefname{definition}{Definition}{Definitions}
\crefname{theorem}{Thm.}{Thms.}
\Crefname{theorem}{Theorem}{Theorems}
\crefname{lemma}{Lem.}{Lems.}
\Crefname{lemma}{Lemma}{Lemmas}



\maketitle

\section{Introduction}
Dimensionality Reduction (DR) is frequently used to visualize and interpret high-dimensional datasets. DR techniques project data from a high-dimensional space into a lower-dimensional space (typically 2D) while aiming to preserve as much of the original structure and relationships as possible~\cite{espadoto2019toward}. DR is popular in visualization and visual analytics, but its use has also expanded to other domains, such as cell or molecular biology~\cite{10.1371/journal.pcbi.1012403, greenCellularCommunitiesReveal2024}, proteomics~\cite{PhysRevX.11.041026, Viegas2024}, and drug discovery~\cite{wanDeeplearningenabledAntibioticDiscovery2024}. Among the analytical tasks supported by DR, discovering neighborhoods, which involves analyzing and interpreting fine-grained local structures, is among the most essential~\cite{nonato2018multidimensional}. Practical examples involve single-cell analysis, where the precise positioning of a single cell can be relevant, for example, to infer rare subpopulations~\cite{kobakArtUsingTSNE2019a, chariSpeciousArtSinglecell2023} or for local interpretability in Explainable AI (XAI), where neighborhoods in the latent space of a model can be examined to audit the classification of a specific data instance~\cite{rauberVisualizingHiddenActivity2017, ribeiro2019visual}.

Despite its relative success and widespread use, DR is an inherently lossy process; no DR method can perfectly preserve the manifold structure of high‑dimensional data. As a result, embeddings are approximations and inevitably contain visual artifacts, that is, visual information that does not faithfully represent the high-dimensional data. An example is manifold tearing, where points that are close in the high-dimensional space are placed too far apart in the embedding, distorting the topology of the original neighborhood~\cite{aupetitVisualizingDistortionsRecovering2007}. The literature~\cite{nonato2018multidimensional, martinsVisualAnalysisDimensionality2014} has cataloged other distortions caused by DR approximations. Here, we introduce an often-overlooked source of visual artifacts: \textit{ambiguous instances}. These are \textit{instances that are highly similar to multiple mutually dissimilar neighborhoods in the high-dimensional space.}

To clarify this concept, we draw an analogy to lexical ambiguity, where a word can belong to multiple dissimilar contexts. For example, the word \textit{jaguar} may refer to a large feline or a luxury car manufacturer. Standard DR methods cannot faithfully represent this ambiguity because each data instance is mapped to a single point in the visual space. In practice, this means that each point representing an ambiguous instance is likely to be placed in only one of its neighborhoods (or contexts) or in no neighborhood at all. We refer to this artifact as \textit{partial neighborhood embedding}, characterized by the loss of neighborhood information for ambiguous instances. The artifact is especially problematic in analyses that rely on instance-level neighborhood structure, as it can convey incomplete or incorrect information.

Metrics that measure neighborhood preservation, such as trustworthiness and continuity~\cite{kaski2003trustworthiness}, cannot properly capture this type of artifact. Although they may indicate reduced neighborhood preservation for these points, they cannot distinguish partial neighborhood embedding from other distortions, since such metrics operate on top of the resulting embeddings; they cannot distinguish whether problems arise from the embedding optimization or from ambiguous instances. Prior work such as multiple maps t-SNE~\cite{van2012visualizing} proposes using multiple projections to visualize non-metric similarities, but these become difficult to compare and this approach does not explicitly identify ambiguous instances. GhostUMAP2~\cite{jung2025ghostumap2} projects points to multiple positions in a single projection, but focuses on instability due to stochastic variation of the placement process rather than ambiguity arising from the data itself.

In this paper, we propose a technique specifically designed to tackle and mitigate the partial neighborhood embedding artifacts. First, we identify ambiguous instances in the high-dimensional space under some definition of neighborhood, for instance, based on the neighborhoods defined by a local DR technique, such as UMAP~\cite{mcinnes2018umap} or t-SNE~\cite{van2008visualizing}. Once detected, we use this information to resolve each ambiguous instance by ``splitting" its projection into multiple points, one for each of its distinct neighborhoods. \Cref{fig:jaguar} illustrates this idea; the word \textit{jaguar} is identified as ambiguous, and a standard projection would result in a partial neighborhood embedding. To address this ambiguity, we split the instance into two points, one corresponding to each neighborhood (or context), indicating that such an instance can belong to distinct neighborhoods. In summary, our main contributions are:
\begin{itemize}[itemsep=0mm, parsep=0pt]
  \item We introduce a DR artifact, which we call \textit{partial neighborhood embedding} that arises from \textit{ambiguous instances}, an often-overlooked source of distortion.
  \item We present a graph-theoretic approach to identify ambiguous instances using neighborhood relationships captured by a local DR technique.
  \item We propose an approach to split and project ambiguous instances, representing them with multiple points, achieving near-linear time complexity in practice.
\end{itemize}
The remainder of this paper presents related work (\cref{sec:relw}), proposes a definition for ambiguity in local DR and delineates our methodology to detect and duplicate ambiguous instances (\cref{sec:method}), showcases examples to illustrate our approach (\cref{sec:results}), provides some quantitative analysis (\cref{sec:quant}), and concludes with discussions and future directions (\cref{sec:limitations,sec:conclusions}). Our implementation is publicly available~\cite{code}.

\begin{figure}[]
    \centering
    \includegraphics[trim={0 0 0 0.8cm}, clip, width=\linewidth]{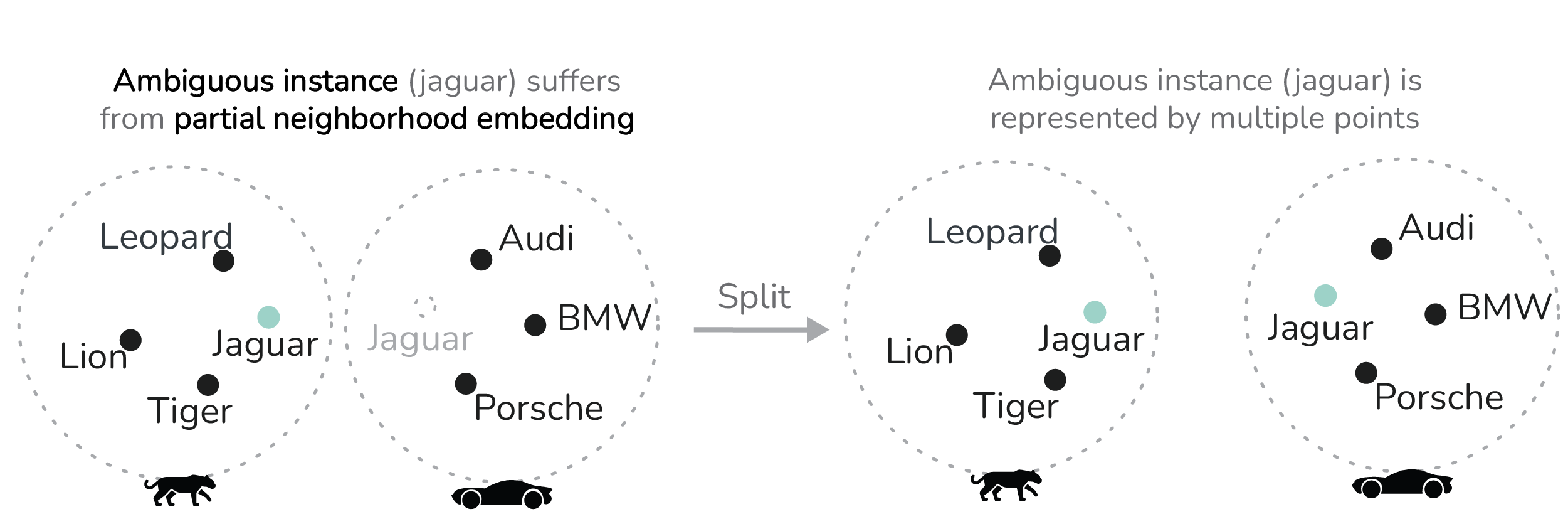}
    \caption{An example of the ambiguous data instance \textit{jaguar} that is placed near only one of its neighborhoods (partial neighborhood embedding). Our approach addresses this artifact by giving a copy of the instance to each distinct neighborhood (or context).}
    \label{fig:jaguar}
\end{figure}

\section{Related Work}\label{sec:relw}
In the following, we frame our work in DR and graph theory literature. Formally, DR maps each instance $x_i \in X$, where $X = \{x_1, \ldots,x_N\} \in \mathbb{R}^m$ into a point $y_i$ in the lower dimensional visual space, resulting in $Y = \{y_1,\ldots,y_N\} \in \mathbb{R}^{n=\{2,3\}}$~\cite{nonato2018multidimensional,espadoto2019toward}. Numerous DR techniques have been proposed. A common way to classify such techniques is based on the nature of the relationships they aim to preserve~\cite{nonato2018multidimensional,espadoto2019toward}. Global techniques, such as Multidimensional Scaling (MDS)~\cite{torgerson1952multidimensional}, ForceScheme~\cite{ROS2026104536, tejada2003improved}, or Sammons's Mapping~\cite{sammon1969nonlinear}, have the goal of preserving overall pairwise distances, whereas local techniques, such as t-Distributed Stochastic Neighborhood Embedding (t-SNE)~\cite{van2008visualizing}, Uniform Manifold Approximation and Projection (UMAP)~\cite{mcinnes2018umap}, ISOMAP~\cite{tenenbaum2000global}, Least-Square Projection (LSP)~\cite{paulovich2008least}, or Locally Linear Embeddings (LLE)~\cite{roweis2000nonlinear} aim to preserve local neighborhoods. 

\vspace{0.20cm}
\noindent \textbf{Topology in DR. } In our discussion, we refer to a recent framework by Paulovich et al.~\cite{paulovich2024dimensionality} which draws parallels between graph theory and DR. It formalizes DR as a two-phase process: first, the \textit{relationship phase} (or topology extraction) models the relationships between instances in the high-dimensional space, and then the \textit{mapping phase} transforms these relationships to a low-dimensional embedding. We will use this terminology in the remainder of the paper.

There are several DR techniques in which the relationships between instances can be modeled by constructing a topological structure, typically a variant of a Nearest Neighbor Graph (NNG). For example, UMAP uses simplicial complexes to construct a graph of the high-dimensional data. LargeVis~\cite{tang2016visualizing} also constructs a k-NNG. TopoMap~\cite{doraiswamy2020topomap} and its successor TopoMap++~\cite{guardieiro2024topomap++} focus on preserving the connected components (i.e., the 0-dimensional topology) through the use of the Rips filtration.  Each of these methods then maps this structure to the visual space through a layout algorithm; UMAP, both TopoMap variants, and LargeVis use a force-directed layout. There are also DR approaches that incorporate topological information explicitly during the mapping phase~\cite{heiter2024incorporating}. Our approach also captures high-dimensional relationships through topology and maps them using a graph-drawing technique. However, we use the topology to identify ambiguous instances in our high-dimensional data and adapt our graph accordingly before the mapping phase.

\vspace{0.20cm}
\noindent \textbf{Splitting and Ambiguity in DR. }
Ambiguity in high-dimensional spaces has largely remained unexplored. However, some existing approaches also consider using multiple representations for the same projection or instances to convey additional information.
One such topic is the study of instability in the mapping phase. Projection Ensemble~\cite{jung2023projection} creates graphs from multiple projections of the same dataset and then uses frequent subgraph mining to assess the stability of subgraph structures across different projections. GhostUMAP2~\cite{jung2025ghostumap2} focuses on conveying instability that arises from the stochasticity in the mapping process based on a single projection, and thereby also displays multiple points for the same instance. Barahimi~\cite{barahimi2021multi} allows an instance to appear at most twice, separating reliable and less reliable representations into two layers based on the forces applied during the mapping phase. Thus, these approaches focus on instability in the projections that arises from the mapping phase. While ambiguity may manifest as instability in the projection, we aim to find ambiguous instances based on the topological structure of the relationships. Taking a different perspective, van der Maaten and Hinton proposed multiple maps t-SNE~\cite{van2012visualizing}, which aims to faithfully represent non-metric similarities. Their approach does not rely on topological information, but rather on probabilistic neighborhood preservation through pairwise similarity distributions. A disadvantage of this approach is that it yields numerous projections of the same dataset, making visual comparison challenging. While it may be possible to deduce the existence of (some) ambiguous instances in the data through this approach, this is not its objective. Additionally, our approach yields a single projection rather than multiple.

\vspace{0.20cm}
\noindent \textbf{Ambiguity in Graph Theory and Network Visualization. }
Livi and Rizzi introduce the concept of graph ambiguity~\cite{liviGraphAmbiguity2013}. In their formulation, graphs are modeled using fuzzy‑set and fuzzy‑information theory, and ambiguity is defined as a global property of the graph, quantified through fuzzy entropy. This measure captures the number of distinct ways the graph can be interpreted, and is used as a discriminative descriptor for graph‑level tasks such as graph classification and inexact graph matching. In contrast, our notion of ambiguity concerns a fundamentally different phenomenon; it is not based on fuzzy‑set theory and is not a global graph property. Instead, we consider individual instances that belong to multiple disjoint neighborhoods in DR‑constructed graphs, and treat such instances as ambiguous.
In network visualization, ambiguity typically refers to layout-induced visual artifacts, including clutter, overlapping communities, or inconsistencies that arise from the drawing algorithm itself that impede the understanding of network structure~\cite{yong2016ambiguityvis, yan2017visualizing}. In contrast, the ambiguity we address originates in the data: an instance belongs to multiple mutually dissimilar neighborhoods in the high-dimensional space. This visual artifact appears only because standard DR methods must map each instance to a single position. Thus, although both manifest as visual artifacts, layout‑induced ambiguity arises from the drawing process, whereas our ambiguity originates in the data itself.

\vspace{0.20cm}
\noindent \textbf{Vertex Splitting. }
While splitting instances into multiple points is uncommon in DR, the concept of \textit{vertex} splitting exists in graph theory. Vertex splitting removes a vertex from the graph, and replaces it with two (or more) new vertices, and distributes the edges of the old vertex over the new vertices. An example of use is planarization, where the aim is to find the minimum number of vertex splits needed to turn a graph into a planar one~\cite{nollenburg2025planarizing, eppstein2018planar}.
Henry et al. discuss vertex duplication as a means to improve the readability of clustered social network layouts~\cite{henry2008improving}. Their work focuses on \textit{clustering ambiguity}, i.e., cases where an actor belongs to multiple communities and explores how duplicating such actors can support community- and graph-level readability tasks. Their approach is primarily conceptual, offering design guidelines without a formal method to identify or split such vertices. In contrast, we adopt a topological perspective and introduce an concrete approach to address this problem.

\section{Methodology}\label{sec:method}
\begin{figure*}[htb]
    \centering
    \includegraphics[trim={0 0 0 0.10cm}, clip, width=\linewidth]{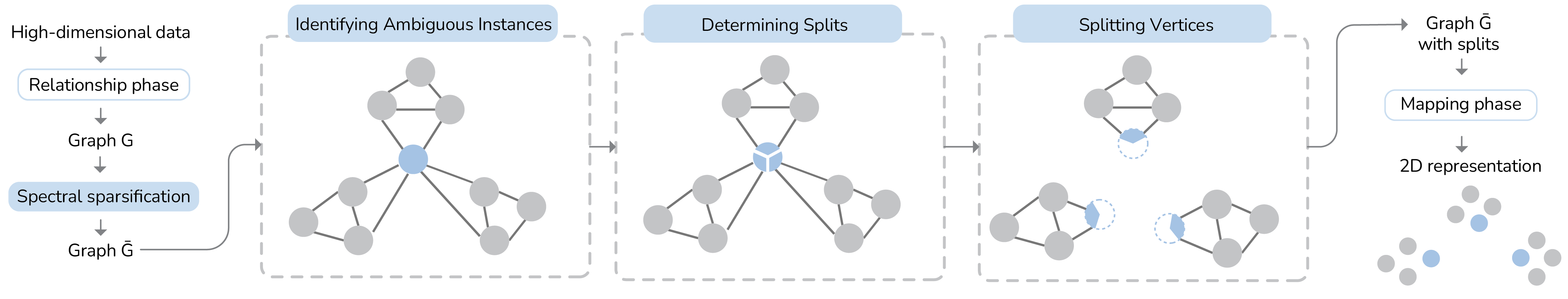}
    \caption{Schematic overview of our approach. Given a graph $G$ resulting from the relationship phase of a local DR technique, we sparsify $G$ to obtain $\bar{G}$, then use it to identify ambiguous instances, i.e., those that connect dissimilar neighborhoods. We determine how many ``copies'' or splits a vertex should get, and split the graph accordingly. This results in a graph that can be drawn in the mapping phase to create a disambiguated layout.}
    \label{fig:flowchart}
\end{figure*}

\begin{figure}[bp]
    \centering
    \includegraphics[width=1.0\linewidth]{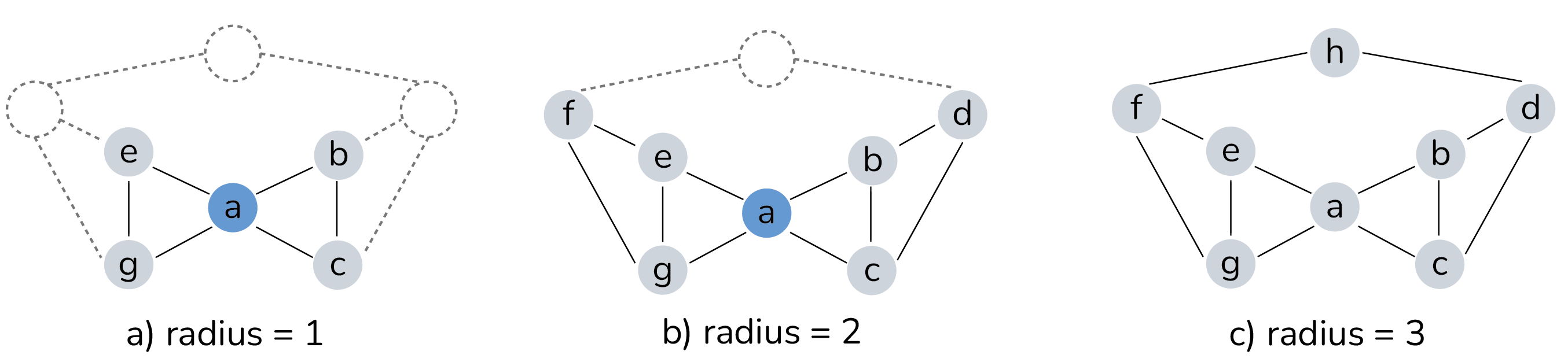}
    \caption{Vertex $a$ is identified as $LAP$ at $r = 1$ and at $r = 2$ because the neighborhoods are disjoint if $a$ was removed. At $r = 3$, vertex $h$ connects the previously disjoint neighborhoods.}
    \label{fig:LAP}
\end{figure}

\vspace{0.20cm}
\noindent\textbf{Problem Definition and Approach Overview. } 
As previously mentioned, the effect of data ambiguity in DR has received limited attention. We informally define  this phenomenon as follows: 

\begin{quote}
\textit{Ambiguous (data) instances} are instances that are highly similar to multiple mutually dissimilar high-dimensional neighborhoods.
\end{quote}

When such data are projected into a low-dimensional space, the neighborhood structure cannot be properly preserved because neighborhoods will be spread across the visual space, and a point cannot occupy multiple positions simultaneously. As a result, ambiguous instances are typically positioned near only one of the neighborhoods to which they belong, or in no neighborhood at all, obscuring their relationship to multiple neighborhoods. This is particularly problematic when using DR for fine-grained local analysis. Thus, unresolved ambiguity typically manifests as visual artifacts in DR projections. This phenomenon has similar roots to \textit{hubness}~\cite{10.5555/1756006.1953015}, a side-effect of the curse of dimensionality in which some high-dimensional data instances disproportionately appear among the nearest neighbors of many instances~\cite{Feldbauer2019}. The fundamental difference between them is that \textit{hubness} concerns instances that are nearest neighbors of many others, whereas ambiguity concerns instances that are members of multiple dissimilar neighborhoods.  In contrast, \textit{antihubs}~\cite{Feldbauer2019} rarely or never occur among the nearest neighbors of other instances and are therefore unlikely to be ambiguous.

We adopt the framework introduced by Paulovich et al.~\cite{paulovich2024dimensionality} to formally define this ambiguity phenomenon for local DR techniques and to present a potential solution based on instance splitting. This framework decomposes the DR process into two phases. In the \textit{relationship phase}, the relationships aimed to be represented, such as neighborhood or distance, are modeled using an undirected weighted graph $G$. In the \textit{mapping phase}, the data is projected accordingly by embedding $G$ into the visual space. For the most popular local techniques, such as t-SNE and UMAP, the embedding is computed by balancing a system of attractive forces between adjacent vertices and repulsive forces between non-adjacent vertices in $G$. Thus, the edges in $G$, i.e., the modeled relationships, play a key role in determining how the data are arranged in the visual space. We therefore express ambiguity directly in terms of the structure of $G$.

Figure~\ref{fig:flowchart} summarizes our approach. After modeling the DR relationships as $G$, we use spectral sparsification to obtain $\bar{G}$, reducing redundant edges that obscure meaningful neighborhood structure (\cref{sec:step1}). We then identify ambiguous vertices in $\bar{G}$ (\cref{sec:step1}), adapting a strategy from graph theory. For each ambiguous vertex, we determine how many neighborhoods it connects, and split the vertex to have a ``copy'' per neighborhood (as detailed in \cref{sec:step2}). This yields a modified graph that can be embedded in the visual space (\cref{sec:embedding}), producing a layout in which multiple representations of an instance reflect its multiple neighborhood memberships.

To set notation, let $G = (V, E)$ denote the undirected weighted relationship graph, where $V = \{v_1, \ldots, v_N\}$ is the set of data instances, $E = \{e_\textit{ij} := (v_i, v_j, \omega_\textit{ij})\} \subseteq V \times V$, where $\omega_\textit{ij}$ denotes the edge weight of $e_\textit{ij}$ between vertex $v_i$ and $v_j$. Next, we discuss each step of our approach.

\subsection{Identifying Ambiguous Vertices}\label{sec:step1}
In Graph Theory, the notions of \textit{articulation points}~\cite{tarjan1972depth} or \textit{cut vertices}~\cite{harary2018graph} approximate our notion of ambiguity, defining nodes whose removal splits a graph into multiple separate components, effectively connecting otherwise disconnected neighborhoods. However, in our setting, ambiguity does not necessarily manifest as a true articulation point, since neighborhoods may still be linked by weak or incidental edges that have minimal impact on the attraction/repulsion forces, and therefore, on the resulting layout. For instance, local neighborhood graphs tend to become denser as neighborhood size increases, introducing many redundant or marginal edges arising from incidental proximity rather than genuine neighborhood membership. 
These spurious connections might obscure articulation points, making them more difficult to detect: hence, we first sparsify the graph to better expose the neighborhood structure.

\vspace{0.20cm}
\noindent\textbf{Graph Sparsification. } To reveal the underlying neighborhood structure, we apply spectral sparsification. This results in a reduced graph whose Laplacian quadratic form remains a close approximation of the original graph~\cite{spielmanGraphSparsificationEffective2008}. This process removes edges that have little influence on the Laplacian, yielding a sparser graph that preserves its structural, connectivity, and algebraic properties, thus yielding a similar embedding of the original graph~\cite{IMRE202089}. In contrast, other sparsification strategies, such as simple weight-threshold filtering, do not guarantee preservation of the graph’s global structure. Although the spectral guarantees are global, the goal is to remove redundant or incidental edges that have little influence on the final DR layout.

Specifically, we use the Spielman-Srivastava sparsification algorithm~\cite{spielmanGraphSparsificationEffective2008}, as this has a nearly linear time complexity. This approach is based on effective resistance, which measures how much an edge contributes to the overall connection structure of the graph. The idea is that if the graph represents an electrical network, each edge $e_\textit{ij}$ with weight $\omega_{\textit{ij}}$ is viewed as a resistor with a resistance of $1/\omega_{\textit{ij}}$. The effective resistance of an edge $e_\textit{ij}$ is defined as the voltage difference that appears between its endpoints when a unit current is injected at one endpoint and extracted at the other. Edges are randomly sampled with probability proportional to their effective resistance, so more important edges are more likely to be retained, whereas low-resistance (redundant) edges are more likely to be pruned. Each retained edge is reweighted inversely proportional to its sampling probability to preserve the overall connectivity.

Let $L$ denote the Laplacian of the original graph $G$, that is, the matrix $L = D - A$, where $D$ is the diagonal degree matrix with $D_{ii} = \sum_j \omega_\textit{ij}$ and $A$ is the adjacency matrix. Also, let $\bar{L}$ denote the Laplacian of the sparsified graph $\bar{G}$. The Spielman-Srivastava sparsification algorithm guarantees that: 

\begin{equation}
(1 - \epsilon)x^TLx \leq x^T\bar{L}x \leq (1 + \epsilon)x^TLx,
\end{equation}
holds for all vectors $x$, where $\epsilon \in (0,1)$ is an error margin. The parameter $\epsilon$ controls a trade-off between precision and sparsity; a lower $\epsilon$ retains more edges and finer details, while a higher $\epsilon$ produces a sparser graph. Thus, varying $\epsilon$ allows us to influence what qualifies as a ``weak'' connection in the context of ambiguity. For more details, refer to Spielman and Srivastava~\cite{spielmanGraphSparsificationEffective2008}.

\vspace{0.20cm}
\noindent\textbf{Local Articulation Points.} While sparsification exposes meaningful structure required for articulation analysis, articulation points (as described by Tarjan~\cite{tarjan1972depth}) only reflect connectivity on a global level. However, we consider ambiguity to be a local phenomenon that arises between neighborhoods rather than at the scale of the entire graph. Therefore, to better align with the objective of local DR methods, we adopt \textit{local articulation points (LAPs)}~\cite{yu2018LAP}. These measure articulation behavior within an $r$-step neighborhood, which, for a given vertex $v_i \in V$, is defined as

\begin{equation}
G_r(v_i) = \{\, v_j \in V \mid d(v_i, v_j) \leq r \,\},
\end{equation}
where $d(v_i, v_j)$ denotes the unweighted shortest-path distance (hop count) between $v_i$ and $v_j$ in $G$, and $G_r(v_i)$ is an induced subgraph of $G$.

The concept of local articulation points (LAPs) was introduced by Yu et al.~\cite{yu2018LAP}, primarily for analyzing network attacks. We provide a precise mathematical definition to formalize this notion by defining the set of $r$-step local articulation points on a graph $G$, denoted $LAP_r$. This set consists of all vertices whose removal increases the number of connected components within their $r$-step induced neighborhoods, where a \textit{connected component} is a maximal set of vertices in $G$ which all vertices can reach one another. Here, $r$ is a parameter of the approach, which determines the locality considered.
\begin{definition}[$r$-step Local Articulation Point] 
A vertex $v_i$ is an $r$-step Local Articulation Point (LAP) if its removal increases the number of connected components of the subgraph of $G$ induced by $G_r(v_i)$.
\end{definition}
The maximum radius $r_{max}$ for any vertex $v_i$ is defined by the distance from $v_i$ to its farthest reachable vertex.
Note that no $r_{max}$ can exceed the graph diameter, that is, the maximum shortest-path distance between any two vertices in $G$. Let $\mathcal{C}(\cdot)$ to denote the set of connected components of a graph. The set of all LAPs at radius $r$ can be written as
\begin{equation}
    LAP_r(G) = \{\,v_i \in V \mid |\mathcal{C}(G_r(v_i) \setminus v_i)| > |\mathcal{C}(G_r(v_i))|\,\}.
\end{equation}

Given this definition and the preceding discussion, we formally define ambiguity as follows.
\begin{definition}[Ambiguous Instance at $r$]\label{def:ambig}

Let $\bar{G} = (V,\bar{E} \subseteq E)$ be the sparsified version of the graph $G$, and let $r$ be a radius.
An instance $v \in V$ is considered ambiguous at radius $r$ if $v \in LAP_r(\bar{G})$.
\end{definition}

To illustrate the concept, we consider a graph in \cref{fig:LAP} for which we aim to find $LAP_r$. For each vertex, we extract its induced subgraph at $r$. \Cref{fig:LAP} shows three such induced subgraphs for vertex $a$, each at a different radius. While $a \in LAP_1$ and $a \in LAP_2$, $a \notin LAP_3$. At $r = 1$ removing $a$ creates two separate connected components; $\{b, c\}$ and $\{g, e\}$. At $r = 2$, the induced subgraph contains neighboring vertices and edges of points $b, c, g \text{ and } e$. Removing $a$ creates two separate connected components; $\{b,c,d\}$ and $\{e,f, g\}$. At $r=3$, vertex $h$ connects the two previously separate components, thus $a \notin LAP_3$. Note, however, that at $r=1$, only direct neighbors are considered, yielding limited structural information.

Computing $LAP_r$ at every radius can be computationally expensive, as it requires checking the connectivity of the induced subgraph for every vertex in the relationship graph. However, we exploit Lemma~\ref{lemma:lemma} to reduce the number of iterations required, where $D$ denotes the diameter of $\bar{G}$.
\begin{lemma}\label{lemma:lemma}
For any vertex $v_i \in V$ and any radius $1 < r \le D$ if $v_i \notin LAP_r$, then $v_i \notin LAP_{r'}$ for all $r' > r$.
\end{lemma}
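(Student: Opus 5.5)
The plan is to show directly that for every radius $r \ge 1$ the ball $G_r(v_i)$ is connected, so $v_i \in LAP_r$ holds exactly when $G_r(v_i)\setminus v_i$ is disconnected. The lemma then reduces to a monotonicity statement: if $G_r(v_i)\setminus v_i$ is connected, so is $G_{r'}(v_i)\setminus v_i$ for every $r'>r$. Throughout, the graph is $\bar G$, since that is the graph on which ambiguity is evaluated (\cref{def:ambig}), and all balls are induced subgraphs of $\bar G$.

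First, I would show that $G_r(v_i)$ is connected. For any $u\in G_r(v_i)$, take a shortest path $v_i=w_0,w_1,\dots,w_k=u$ in $\bar G$, with $k\le r$. Each $w_j$ satisfies $d(v_i,w_j)=j\le r$, so the whole path lies in the induced subgraph $G_r(v_i)$. Hence $|\mathcal{C}(G_r(v_i))|=1$.

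Consequently, $v_i\notin LAP_r$ means $|\mathcal{C}(G_r(v_i)\setminus v_i)|\le 1$. There are two cases.

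\emph{Degenerate case:} $G_r(v_i)\setminus v_i$ is empty. Since $r\ge 1$, this happens only when $v_i$ is isolated. Then $G_{r'}(v_i)=\{v_i\}$ for every $r'$, removing $v_i$ yields $0$ components, and this does not exceed $1$, so $v_i\notin LAP_{r'}$.

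\emph{Main case:} $G_r(v_i)\setminus v_i$ is nonempty and connected. Fix $r'>r$ and any $u\in G_{r'}(v_i)\setminus v_i$. Its shortest path $v_i,w_1,\dots,u$ lies in $G_{r'}(v_i)$, as in the first step. The subpath $w_1,\dots,u$ avoids $v_i$, because a shortest path does not revisit its start, so it lies in $G_{r'}(v_i)\setminus v_i$. Thus every vertex of $G_{r'}(v_i)\setminus v_i$ is joined within that subgraph to some neighbor $w_1$ of $v_i$.

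All neighbors of $v_i$ lie in $G_1(v_i)\subseteq G_r(v_i)$. Moreover, $G_r(v_i)\setminus v_i$ is an induced subgraph of $G_{r'}(v_i)\setminus v_i$, since the vertex sets are nested and both are induced from $\bar G$. It is connected by hypothesis. Therefore all neighbors of $v_i$ lie in one component of $G_{r'}(v_i)\setminus v_i$, and every other vertex joins that component. This gives $|\mathcal{C}(G_{r'}(v_i)\setminus v_i)|=1=|\mathcal{C}(G_{r'}(v_i))|$, so $v_i\notin LAP_{r'}$.

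The main obstacle is the step that routes every vertex of the larger ball back to a neighbor of $v_i$ without passing through $v_i$. This is handled by using shortest paths from $v_i$, which leave $v_i$ immediately and never return. I would also note that the argument works for $r=1$ as well, so the hypothesis $r>1$ is not needed. Finally, for $r'>r_{max}$ the ball no longer grows, so the claim becomes trivial there.
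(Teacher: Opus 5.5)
Your proof is correct, and it is more complete than the paper's. The paper argues only from the nesting $\bar G_r(v_i)\subseteq \bar G_{r'}(v_i)$, and hence nested edge sets. It concludes from this that connectivity of $\bar G_r(v_i)\setminus v_i$ carries over to $\bar G_{r'}(v_i)\setminus v_i$. Taken alone, that inference is not valid: a connected subgraph can sit inside a disconnected graph. A priori, the vertices at distance between $r+1$ and $r'$ could form separate components once $v_i$ is removed.

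Your key step is exactly what closes this. You show that every vertex of the larger punctured ball reaches a neighbor of $v_i$ along the tail of a shortest path from $v_i$, which stays inside the ball and never revisits $v_i$. You also check that each ball $G_r(v_i)$ is itself connected, so that $v_i\notin LAP_r$ really is equivalent to the punctured ball having at most one component; the paper uses this tacitly. In addition, you handle the isolated-vertex case separately.

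The paper's version buys brevity; yours gives an airtight argument. It also shows correctly that the restrictions $r>1$ and $r\le D$ in the statement are unnecessary.
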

\begin{proof}
If the induced subgraph $\bar{G}_r(v_i)$ at radius $r$ remains connected after removing $v_i$, then for any larger radius $r' > r$, the subgraph $\bar{G}_{r'}(v_i)$ with $v_i$ removed is also connected. This follows from the fact that, by construction, $\bar{G}_r(v_i) \subseteq \bar{G}_{r'}(v_i)$, thus $E_{\bar{G}_{r}} \subseteq E_{\bar{G}_{r'}}$. Therefore, if $\bar{G}_r(v_i) \setminus v_i$ is connected, $\bar{G}_{r'}(v_i) \setminus v_i$ is also connected.
\end{proof}
In practice, many vertices are never LAPs or are LAPs only at small radii, so this significantly improves the algorithm's efficiency. 

\subsection{Determining Splits and Splitting Vertices} \label{sec:step2}
\noindent \textbf{Determining Splits.} 
In the previous subsection, we discussed how to identify ambiguous data instances using LAPs; now we introduce the definition of a \textit{vertex split} and explain how we determine the number of splits per ambiguous vertex.
For every vertex $v_i \in LAP_r$ and radius $r$, to determine the number of splits, we consider the connected components that remain after removing $v_i$ from its induced subgraph $G_r(v_i)$. The splits are determined based on the graph $\bar{G}$, ensuring that the outcome for each vertex does not depend on the order in which vertices are processed.

Formally, we define a vertex split for any vertex $v_i \in LAP_r$ as follows.

\begin{definition}[Vertex Split]
    For a vertex $v_i \in LAP_r$ and radius $r$, a \textit{vertex split} replaces $v_i$ with a set of vertices ${S}(v_i,r) = v_{i,1},\dots,v_{i,K}$ where $K = |\mathcal{C}(G_r(v_i) \setminus v_i)|$.
    Each $v_{i,k} \in S(v_i,r)$ corresponds to a connected component $C_k \in \mathcal{C}(G_r(v_i) \setminus v_i)$ and inherits edges to the neighbors in that component:
    \[
    E(v_{i,k}) = \{ (v_{i,k}, v_j, \omega_\textit{ij}) \mid v_j \in \Gamma(v_i) \cap C_k \},
    \] where $\Gamma(v_i)$ are the neighbors of $v_i$ and $\omega_\textit{ij}$ is the edge weight between $v_i$ and $v_j$ in $\bar{G}$. 
\label{def:vsplit}
\end{definition}

\begin{figure}[tbp]
    \centering
    \includegraphics[width=1.0\linewidth]{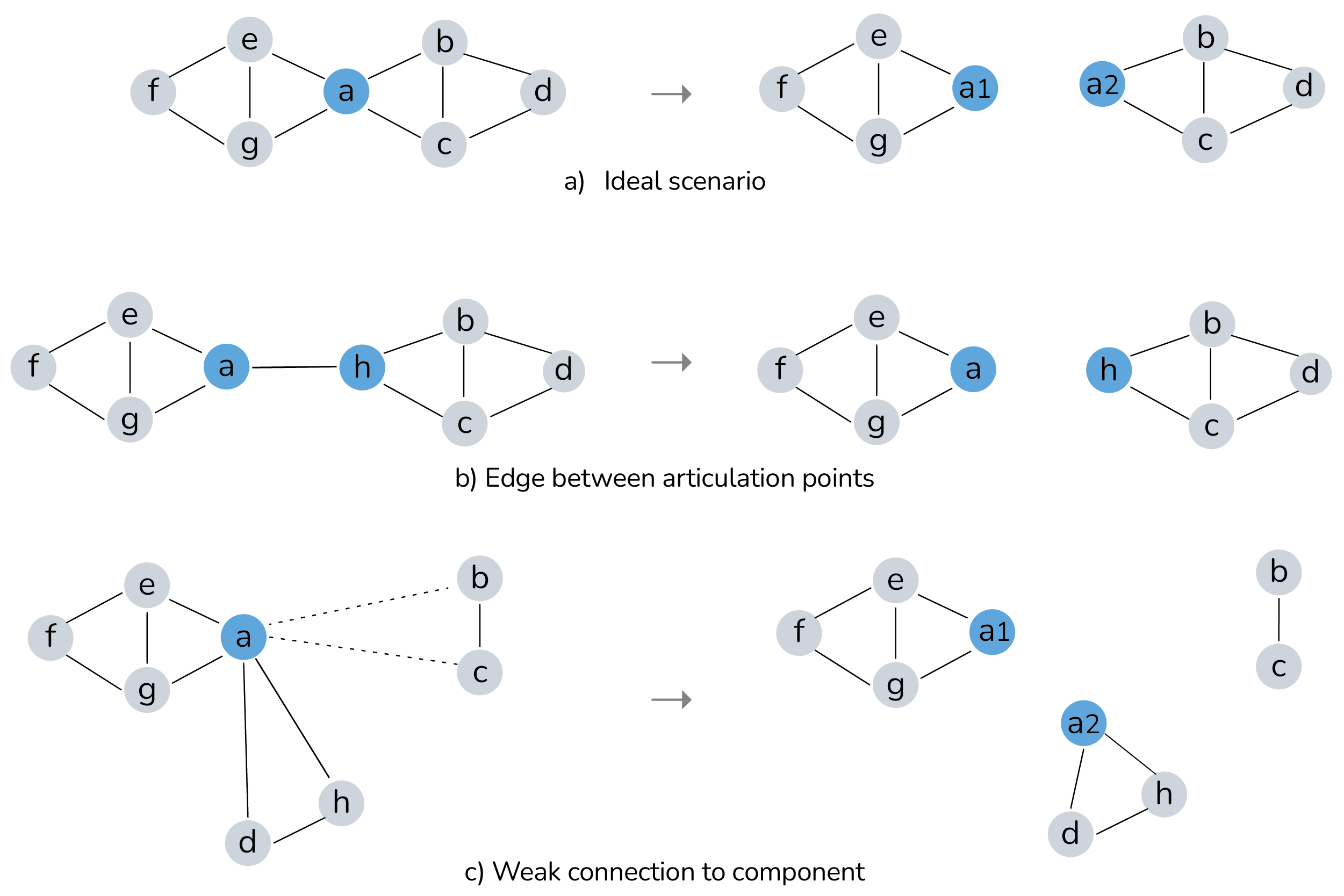}
    \caption{Three scenarios for vertex splitting, showing the resulting graph after a potential split. Ambiguous instances are shown in blue.}
    \label{fig:cases}
\end{figure}

\vspace{0.20cm}
\noindent\textbf{Decision Rules for Vertex Splits. }The most straightforward scenario for disambiguating an ambiguous vertex by splitting it into two is presented in \cref{fig:cases}a. 
In the ideal scenario, vertex $a$ is adjacent to two disjoint neighborhoods, so it is split into two vertices, $a1$ and $a2$, each adjacent to its respective neighborhood.

There are cases where vertices are identified as ambiguous, but given the graph's structure or weight, there should be no (or fewer) splits, as these splits may result in isolated points in the projection due to weak attraction forces. More specifically, there are two scenarios in which a split should not be included in $S(v_i,r)$, as illustrated in \cref{fig:cases}b and c. The former is an LAP--LAP connection: if an edge bridges two otherwise disjoint neighborhoods, and both endpoints are identified as LAPs (\cref{fig:cases}b). In this situation, the vertices are not genuinely ambiguous, but their multi-neighborhood structure is created only by a single edge. Therefore, no additional vertices are added to $S(v_i,r)$. Since this edge represents a spurious bridge between otherwise disjoint components, we do not include it in the resulting graph, while keeping both $a$ and $h$ unsplit. Another scenario is a weak connection to the component. If the aggregate weight of some connection to a component is significantly smaller than the weight of its strongest component, e.g. the component with the largest total edge weight to the LAP, we classify it as a ``weakly supported neighborhood''. In \cref{fig:cases}c, the connection of vertex $a$ to connected component $\{b,c\}$ is weak, while the connection to component $\{d,h\}$ and $\{e, f, g\}$ is strong. The split set for vertex $a$ consists of two splits instead of the potential three. We also exclude splits for components that consist of only one element. We do not include such splits from $S(v_i,r)$ to ensure that the disambiguated graph only contains splits that are supported by sufficient structural evidence.

\vspace{0.20cm}
\noindent\textbf{User-Defined Weight Threshold. } We introduce a user-defined threshold $\tau_w \in [0,1]$, which specifies how strong a component’s connection to a vertex must be, relative to the strongest connected component. A component's connection strength is defined as the sum of all weights $\omega_\textit{ij}$ from $v_i$ to all non-LAP vertices in that component. The threshold $\tau_w$ thus controls which components are included in $S(v_i,r)$; only those whose connection strength reaches at least a fraction $\tau_w$ of the strongest component are retained. Because $\tau_w$ is defined as a ratio rather than an absolute weight threshold, it adapts to local density. This is important as edge weights vary with local density, so a global cutoff may remove meaningful connections in sparser regions. 

\vspace{0.20cm}
\noindent \textbf{Constructing Disambiguated Graph. } \label{sec: splitting}
Finally, given a sparsified graph $\bar{G} = (\bar{V}, \bar{E})$, for every radius $r$ where $LAP_r \neq \emptyset$, we construct a disambiguated graph $\bar{G}'_r = (V'_r, E'_r)$ as follows. For every $v \in LAP_r$, we add its split set $S(v, r)$ to $V'_r$. All other vertices $u \in V \setminus LAP_r$ are added to $V_r'$.
Edges in $\bar{E}$ whose endpoints both lie in $LAP_r$ are not included in $E'_r$, while the opposite applies to edges incident to vertices both \textit{not} in $LAP_r$. All other edges and their weights are included in $E'$ according to the vertex splits, following \cref{def:vsplit}.

\subsection{Mapping and Visualization} \label{sec:embedding}
As a final step, the mapping phase is executed to define the spatial positions of the vertices in $\bar{G}_r'$, yielding a projection in which the splits of ambiguous instances are assigned to each of their neighborhoods. The aim is to produce an embedding with fewer partial neighborhood artifacts. Merely augmenting the current layout with additional points is not enough because it may retain structures that arise from these artifacts. This can be achieved using any mapping method that handles a (weighted) graph as input, such as the negative sampling strategy employed by UMAP or a suitable graph drawing technique, such as ForceAtlas2~\cite{10.1371/journal.pone.0098679}. Executing the mapping phase on $G_r'$ yields a projection with splits. Since each instance can appear as multiple ``copies'' in the graph, one for each neighborhood, points can be positioned near all of their disjoint or weakly connected neighborhoods. As a simple visual metaphor to represent this information, we distinguish splits from other points using different markers (crosses vs. circles, respectively), and render the splits last so they do not overlap with other points. In addition, we added an interaction that highlights and connects, using dashed lines, all copies of a selected ambiguous instance, making it easier to identify the different neighborhoods the copies were assigned. 

\subsection{Computational Complexity}
The computational cost of our approach consists of four main components. We take as input the relationship graph $G=(V,E)$ with $n = |V|$ vertices and $m = |E|$ edges. Sparsifying $G$ using the Spielman-Srivastava algorithm runs in near-linear time $\tilde{O}(m)$. Since we are considering only local DR techniques, with instances connected to only a few instances (usually the nearest neighbors), the sparsification is then $\tilde{O}(n)$. Identifying ambiguous vertices in $\bar{G}$ requires computing connected components of the induced neighborhoods $G_r(v_i)$. In the theoretical worst case, this is $O(n+m)$ per vertex, giving a $O(n^2)$ overall bound. In practice, this cost is much lower for sparser graphs: by Lemma~\ref{lemma:lemma}, once a vertex is not an LAP at some radius, it cannot become one at larger radii, so only a small number of radii must be tested, and relatively few vertices are LAPs. Determining splits and constructing each per-radius graph $G'_r$ requires only linear passes over the relevant neighborhoods. Finally, the cost of the mapping phase depends on the chosen DR or layout method, but since $\bar{G}$ is typically significantly smaller than $G$, and $G_r'$ differs from $\bar{G}$ by only a small number of additional vertices and edges, this step adds negligible overhead compared to embedding the original graph. Thus, although a quadratic worst case exists, in practice the overall computational complexity is near-linear in the number of data instances for detecting and splitting ambiguous instances; the drawing complexity follows the employed algorithm.

\section{Usage Scenarios}\label{sec:results}
\begin{figure*}[htb]
    \centering
    \includegraphics[width=0.9\linewidth]{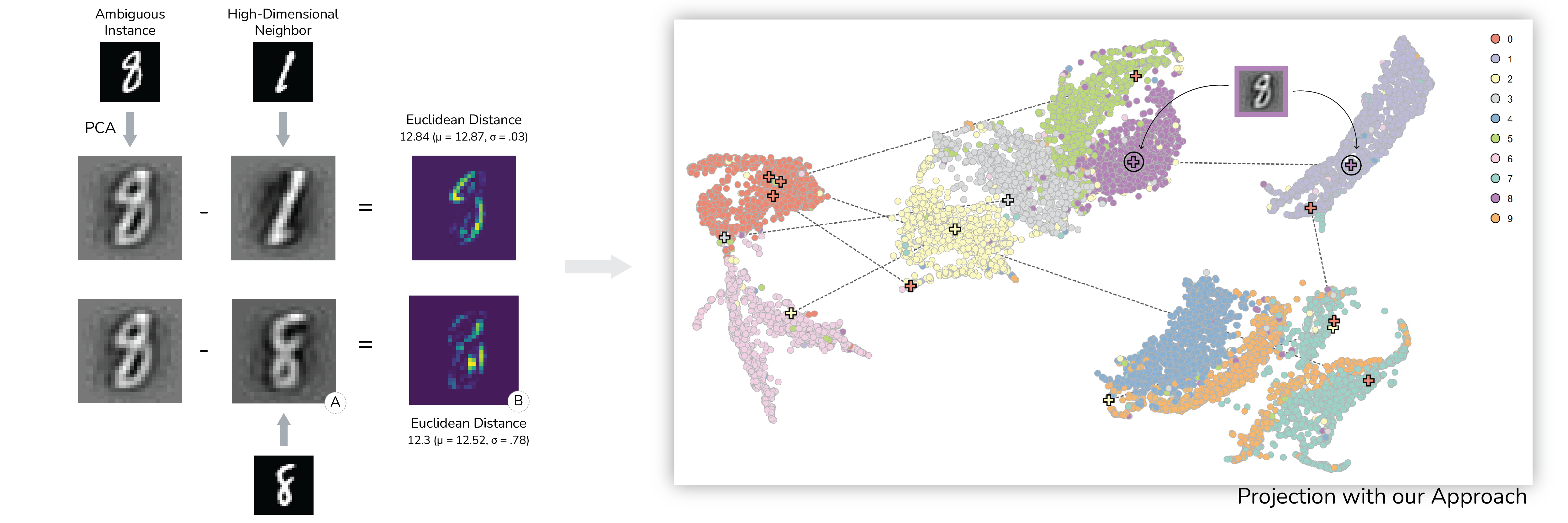}
    \caption{Projection of the MNIST testset, first reduced to $200$ dimensions through PCA. The highlighted ambiguous instance (8) may seem unintuitive, but based on the data representation and distance metric (Euclidean), it is similar to both 1s and 8s.}
    \label{fig:mnist}
\end{figure*}

\begin{figure*}[htbp]
    \centering
    \includegraphics[width=0.8\linewidth]{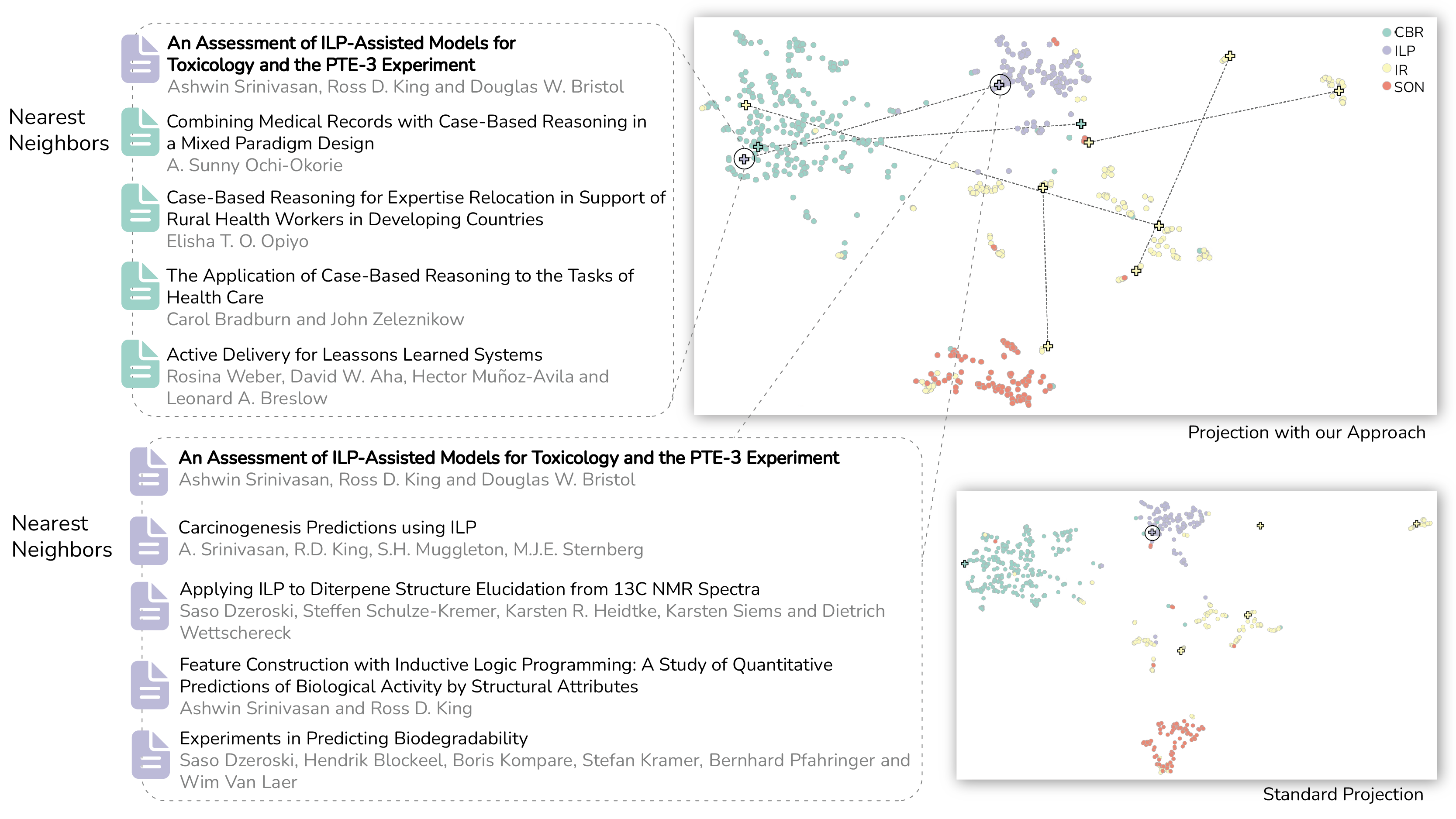}
    \caption{Two projections of the same text dataset containing information on papers, the standard projection suffers from partial neighborhood embedding for three instances (under this parameterization). The highlighted instance was placed with only papers on Inductive Logic Programming (ILP), but with our approach, we see that it is also similar to papers in Case-Based Reasoning (CBR).}
    \label{fig:cbr}
\end{figure*}

In this section, we present examples of how splitting ambiguous instances can reveal additional insights and improve the problem of partial neighborhood embedding. We also conduct analysis on the graph sparsification, the effect of neighborhood size, and whether existing DR quality metrics can capture ambiguity in the data. In the following examples, points representing ambiguous instances with splits ($|S(v_i,r)| > 1 $, i.e., vertices that are replaced by two or more vertices) are represented by a cross, while all other points are represented by circles. We connect multiple representations of the same instance through dashed lines. To facilitate comparison between a standard projection and a projection produced by our approach, we align the layouts as closely as possible by initializing our projection with the coordinates of the standard one and assigning positions to split points in this initialization based on the mean positions of their neighbors in the standard layout. The mapping phase is executed on the graph with splits using this initialization. Other alignment approaches exist (e.g., ~\cite{rauberVisualizingTimeDependentData2016, cantareiraGenericModelProjection2020}), but here we opt for simplicity. We use UMAP for both the relationship and mapping phases, although the approach is technique-agnostic for graph-based local DR.

\vspace{0.20cm}
\noindent\textbf{Enhanced Instance-Level Insights.}
In this example, we highlight how addressing the partial neighborhood embedding problem by splitting ambiguous instances can improve the accuracy of the derived insights in a local explainable Artificial Intelligence (XAI) example.
We train a convolutional network following the architecture from~\cite{rauberVisualizingHiddenActivity2017} on the cropped Street View House Numbers (SVHN) dataset~\cite{netzer2011reading}. We use the predefined train and test splits ($73,257$ and $26,032$ instances, respectively), and further divide the train set into a 90/10 training-validation split. The resulting model achieves an accuracy of $93\%$ on the test set. For our analysis, we extract the $512$-dimensional latent space of the model as inspired by~\cite{rauberVisualizingHiddenActivity2017} and select a random subset of 5,000 examples to examine some of the misclassifications in more detail. We set the number of neighbors of UMAP to $15$ since it is a commonly used value~\cite{mcinnes2018umap}. We set $\epsilon = 0.7$, since a sparser graph allows us to identify more ambiguous instances, and $\tau_w = 0.05$ so that we only limit splits that have $5\%$ of the weight to the strongest component per instance. We do this so that we can still capture many of the ambiguous instances; however, contributions below $5\%$ represent such a small fraction of an instance's overall weight that we want to disregard those. \Cref{fig:svhn} shows the standard projection (left) and the projection after executing our approach at $r=2$, which covers $2.17\%$ of the total graph on average, that is, on average $2.17\%$ of the graph is reachable if starting from any of its vertices at $r=2$. We use $r=2$ for this and all subsequent examples because it is the most local yet requires instances to be ambiguous on a larger scale.

For the local XAI analysis, we focus on a single instance, as is typical when auditing classification results. The selected (ambiguous) instance is misclassified as $7$ but has a ground-truth class of $1$. We can see in the projection that, after our approach identified the instance as ambiguous, it is split into two points, placing one duplicate within a neighborhood of sevens (the predicted label) and the other within a neighborhood of ones (the true label). Yet, when looking at the standard projection (\cref{fig:svhn}(left)), this instance is only shown in the neighborhood of sevens. This is a clear example of partial neighborhood embedding, where it was obscured that the instance is also similar to the neighborhood of ones. As a result, an analyst investigating misclassification causes~\cite{rauberVisualizingHiddenActivity2017} may be misled into believing that the classifier is confident about the mistake, even though this instance is actually ambiguous with respect to which neighborhood it belongs to. This ambiguity is valuable information when a single instance is of particular interest, yet it is typically hidden when standard DR approaches are applied.

\vspace{0.20cm}
\noindent\textbf{Expectation vs Data Representation.}
While the previous example may have been intuitive, as a one and a seven look alike, it is important to note that this effect results from the data representation (the model may have learned what numbers look like) and the employed distance metric. \Cref{fig:mnist} shows a UMAP projection of the MNIST test set~\cite{lecun1999object} (10,000 instances). A common way to reduce noise before projecting this data is to first apply PCA~\cite{van2008visualizing}. We apply PCA and reduce the data to $200$ dimensions to obtain a projection that better separates the different labels while retaining sufficient detail. Consistent with the previous example, we use our approach with $\epsilon = 0.7$, $\tau_w = 0.05$ and inspect the results at $r = 2$ (average coverage = $1.73\%$).

As an example, we inspect one of the ambiguous instances (\cref{fig:mnist}), labeled as an $8$, and its neighbors in the UMAP graph. Its neighbors are spread across two components; one consisting of $11$ instances with label $8$ (and one instance with label $2$), and consisting of three of instances with label $1$. Although we may assume that an eight does not look very similar to a one, in this example, we use Euclidean distance (squared pixel differences) over the PCA reduction as input to UMAP, which does not necessarily reflect our interpretation of how similar two instances are. To illustrate this, in \cref{fig:mnist} we inverse-project the PCA reductions to recreate the (approximate) images of the digits (A) as well as the differences between them (B). As can be noticed, the difference between two $8$s, or one $8$ and a $1$, pixel-wise, is close, causing the instances to appear as ambiguous even though we may not perceive it as such. Therefore, it is important to note that ambiguity is a property of the data representation and the distance metric used, rather than of how we perceive it. In an ideal scenario, expectation and representation should match, but this is not always the case, and our approach can reveal such mismatches, and could thus be used to explore whether the distance metric matches the mental model of the user.

\vspace{0.20cm}
\noindent\textbf{Application on Text Data.}  While the previous two examples concern image data, our approach also works for other types of data, such as textual data. We use a dataset consisting of the titles, authors, abstracts, and references for $675$ scientific papers, divided into four classes, originally introduced in~\cite{paulovich2008least}. The four classes are Case-Based Reasoning (CBR), Inductive Logic Programming (ILP), Information Retrieval (IR), and Sonification (SON). Papers are assigned to these classes based on which of these terms were used in the querying process. We use a Bag-of-Words (BoW) model, weighted by Term Frequency–Inverse Document Frequency (TF-IDF), to vectorize the data, and cosine similarity as the distance metric for the DR technique, the typical choice for this type of data~\cite{manning2008introduction}. We set UMAP's number of neighbors to $10$ as this is a smaller dataset. Again, we set $\tau_w$ to 0.05. We set $\epsilon = 0.7$. At $r=2$, the average coverage of the graph is $9.8\%$. 

Both the standard projection and the projection where ambiguous instances are split are shown in \cref{fig:cbr}. There are three ambiguous instances that exhibit partial neighborhood embedding issues in the standard projection. We detail one such instance and its nearest neighbors in the visual space in \cref{fig:cbr}. The ambiguous instance is a paper on \textit{ILP-Assisted Models for Toxicology}, and in this case is represented by two points in the layout; one is positioned along the other papers on ILP, and the other is in a group of papers on CBR. However, once we look at the titles of these papers (and inspect the abstracts), we can see that while the nearest neighbors in the ILP group are clearly related to ILP as well as healthcare and biology, the CBR neighbors are about medical records and healthcare and designing systems to support this, so these papers have related terms and subjects. Therefore, this instance can belong to both neighborhoods, but a standard UMAP projection would have obscured this by placing it in only one neighborhood. This shows a different scenario where our approach can provide additional insight not afforded by standard DR approaches.

\vspace{0.20cm}
\noindent\textbf{Revisiting Existing Use of DR.}
\begin{figure}[tb]
    \centering
    \includegraphics[trim={0 0 0 0.80cm}, clip, width=0.95\linewidth]{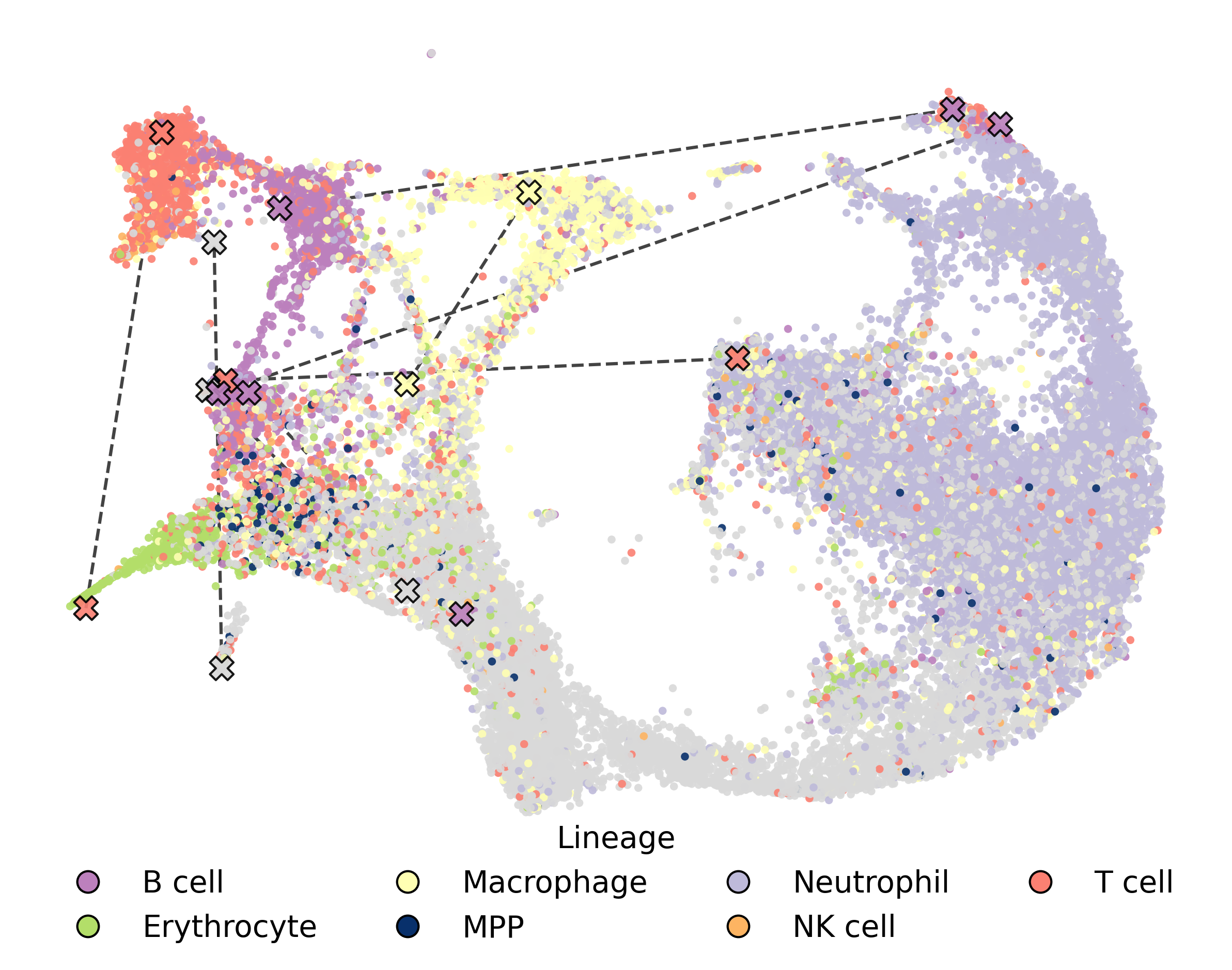}
    \caption{Projection of a mice cell landscape, colored by cell-type as in Becht et al.\cite{bechtDimensionalityReductionVisualizing2019b}. Some ambiguous instances belong in multiple neighborhoods.}
    \label{fig:singlecell}
\end{figure}
DR is a popular approach for analyzing tissue composition in single-cell analysis. Becht et al.~\cite{bechtDimensionalityReductionVisualizing2019b} discuss how UMAP can be used to visualize high-dimensional single-cell data and to reveal meaningful cellular structure through cluster separation and continuity relationships. Transitions between clusters are often interpreted as potential developmental trajectories or intermediate states. Therefore, knowing whether individual cells should be placed in multiple positions may be informative. Conversely, if cells are ambiguous but the embedding would display them only at a fixed position, important biological insights may be missed. Therefore, detecting and representing this ambiguity can improve the interpretation of continuous processes and rare intermediates in single-cell datasets. 

We use the same UMAP parameterization of Becht et al.~\cite{bechtDimensionalityReductionVisualizing2019b} and use their preprocessed subset of 40,000 instances from the Han dataset~\cite{han2018mapping}, which has been reduced from $25,912$ to $30$ dimensions using PCA. Following \cite{bechtDimensionalityReductionVisualizing2019b}, we set the number of neighbors to $30$. We set $\epsilon = 0.9$ to ensure sufficient sparsity without significantly affecting the neighborhood structure (as discussed later) and $\tau_w = 0.1$. We increased $\tau_w$ here compared to the previous examples as $\tau_w = 0.05$ resulted in an isolated point that would not be attracted enough to any neighborhood. ~\Cref{fig:singlecell} shows our results at $r=2$ (average coverage = $0.6\%$), revealing that there are several instances that present partial neighborhood embedding issues if a standard projection is used. While we are not domain experts in single‑cell biology, our goal here is to highlight that acknowledging and visualizing ambiguous instances in DR embeddings may support more nuanced biological interpretation, potentially leading to overlooked insights.

\section{Quantitative Analysis}\label{sec:quant}
\noindent\textbf{Number of Neighbors.} We conduct an experiment to examine how the number of neighbors used to construct the relationship graph affects the results. Using the SVHN dataset with the same parameterization as in the previous examples (including sparsification), we vary only the neighborhood size and report the number of instances split at different radii (see \cref{fig:neighbors}). Note that this differs from the number of instances identified as ambiguous, as additional decision rules determine whether an ambiguous instance is ultimately split. The number of splits is therefore the more informative quantity, as this is what appears in the produced projections. \Cref{fig:neighbors} shows the results. We can see that at only $5$ nearest neighbors, there are instances that are split at $r = 1$ all the way up until $r = 8$ (\cref{fig:neighbors}A). However, once we increase the number of neighbors to $10$, there are no more instances split at radii $5, 6, 7 \text{ and } 8$. Since, with $5$ neighbors, we get a very sparse graph, this is expected. Interestingly, we do see an increase in ambiguous instances being split at $r = 1$ for $10$ neighbors (\cref{fig:neighbors}B); this is likely because, with five neighbors, the connections are so sparse that many instances are connected through an LAP or connected to only one edge. Other than that, the results are as expected; the larger the number of neighbors, the fewer ambiguous instances that get split, and the radius at which they are split decreases. As the number of neighbors increases, the graph becomes more connected, hence there are more paths that may connect nodes, so instances remain ambiguous at a smaller radius. To increase the number of ambiguous instances with a larger number of neighbors, it may be beneficial to sparsify more aggressively. However, this depends on the quality of the projection remaining after sparsification with respect to neighborhood preservation; we investigate this next.
\begin{figure}[tb]
    \centering
    \includegraphics[width=0.85\linewidth]{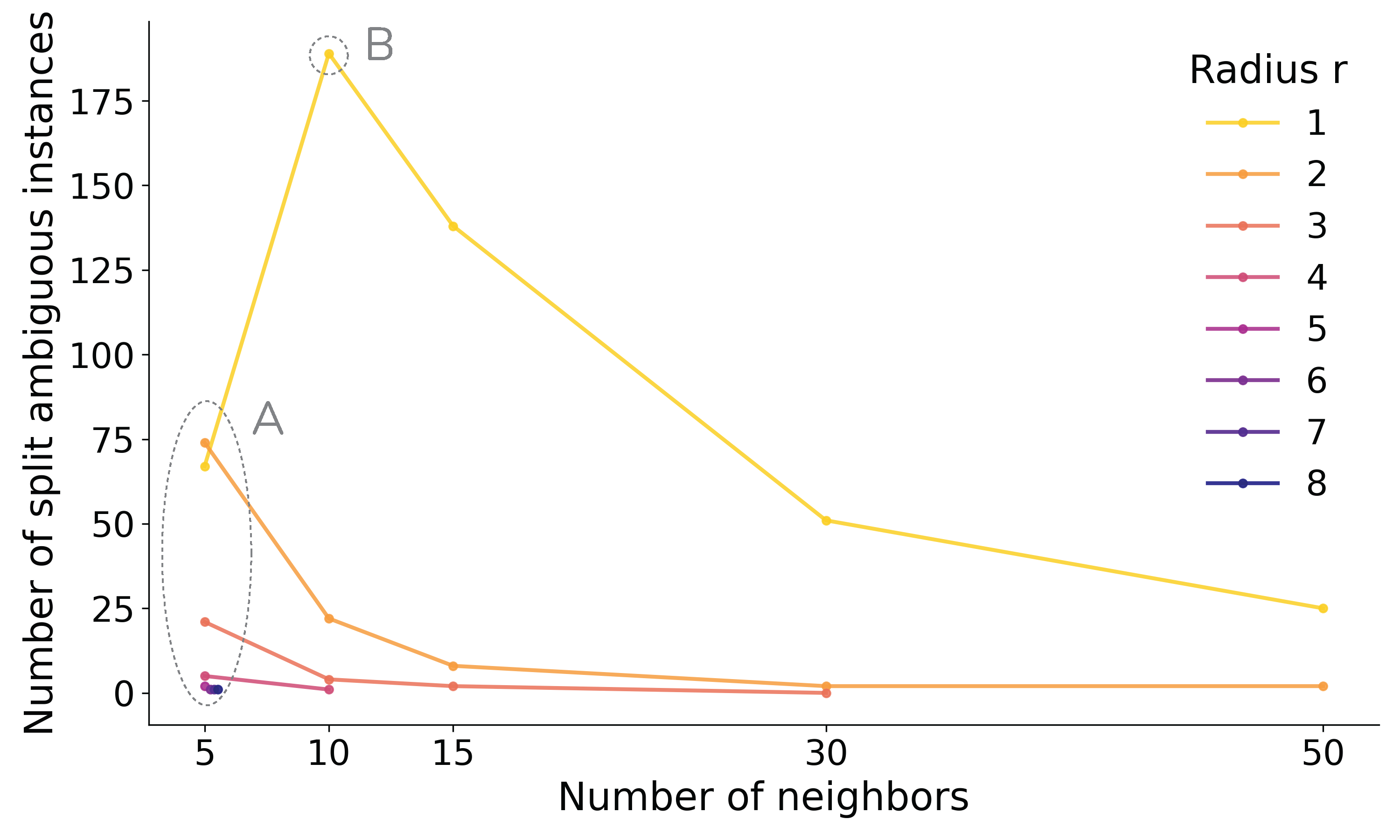}
    \caption{Increasing the number of neighbors mostly decreases the number of instances that are split, as a result of increased connectivity in the graph.}
    \label{fig:neighbors}
\end{figure}

\vspace{0.20cm}
\noindent\textbf{Sparsification.}
To assess whether our graph sparsification preserves the local neighborhood structure required to produce low-dimensional embeddings that are comparable to the ones created using a non-sparsified version, we measure the extent to which embeddings of a sparsified graph retain the same $k$-nearest-neighbor (kNN) relationships as embeddings of the original graph. We quantify this using the fraction of preserved $k$-nearest neighbors between two embeddings, defined as
\begin{equation*}
\mathrm{preservedNN@k}(A, B) =
\frac{1}{n} \sum_{i=1}^n 
\frac{\left|\mathrm{NN}_k^{A}(i)\cap \mathrm{NN}_k^{B}(i)\right|}{k},
\end{equation*} where $A$ and $B$ are the projections and, ${\text{NN}}_k^{A}$ and ${\text{NN}}_k^{B}$ are their $k$-nearest neighbors, respectively. This measure is related to the Jaccard-based neighborhood comparison of Martins et al.~\cite{martinsExplainingNeighborhoodPreservation}, but we normalize by $k$ rather than by the union size, as we aim to assess what percentage of neighbors are retained. We prefer this metric over rank-based neighborhood-preservation measures~\cite{kaski2003trustworthiness, leeQualityAssessmentDimensionality2009a}, which can be overly sensitive by imposing strong penalties for minor rank shifts. Our focus, however, lies primarily on whether neighbors are present (or absent).

Given the non-deterministic nature of UMAP, which can result in distinct layouts for the same input (parametrization + dataset), a direct comparison between a single embedding of an original UMAP graph $G$ and its sparsified version $\bar{G}$ is insufficient. We therefore measure the relative effect of sparsification by comparing its variability against UMAP’s inherent variability. In this process, we first estimate the baseline variability by generating embeddings from the original graph $G$, yielding the set $\mathscr{P}$. We also generate a set of embeddings from $\bar{G}$, which we denote as $\mathscr{Q}$. Then we evaluate the neighborhood overlap ($\mathrm{preservedNN@k}$) between every pair of projections in $\mathscr{P} \times \mathscr{Q}$. To compare the effect of sparsification with the baseline UMAP variability, we form ratios:
\begin{equation*}
\rho(S, \bar{S}) =
\frac{
\bar{S}
}{
S
},
\end{equation*} where $\bar{S}$ is defined as $\mathrm{preservedNN@k}$ between an embedding $A \in \mathscr{P}$ and $B \in \mathscr{Q}$, and $S$ is defined as $\mathrm{preservedNN@k}$ between an embedding $A \in \mathscr{P}$ and $B \in \mathscr{P}$, we call this ratio $\rho_{2D}$. 

We also conduct a similar analysis using the original high-dimensional data $X$ as a reference. For each embedding in $\mathscr{P}$ and $\mathscr{Q}$, we compare them to the high-dimensional space. This means that $\rho(S, \bar{S})$ is defined as $\mathrm{preservedNN@k}$ between an embedding $A \in \mathscr{Q}$ and the high-dimensional space $X$, while $S$ refers to the $\mathrm{preservedNN@k}$ between an embedding $A \in \mathscr{P}$ and $X$. This ratio captures how sparsification affects the alignment between the embedding and the true high-dimensional neighborhoods; we call it $\rho_{HD}$.

We used the datasets from the previous examples with the same parameters described before. We created five embeddings for each graph $G$ and $\bar{G}$, and set $k$ to match the number of neighbors used to construct the graph. We use spectral initialization to generate all projections. If the sparsification retains structure within the same variation as full graphs, the ratios $\rho_{2D}$ will cluster near $1$. Ratios substantially below $1$ indicate that sparsification produces embeddings with more variation in local structures. If the preservation of the high-dimensional space is very similar for the sparsified graphs compared to the full graphs, $\rho_{HD}$ will be near $1$, whereas a low score indicates worse neighborhood preservation than the full graphs. Results are summarized in \Cref{fig:boxplots}. CBR (the text dataset), MNIST, and SVHN datasets have high ratios both for $\rho_{2D}$ and $\rho_{HD}$, so the sparsification does not affect local neighborhood structures. However, the RNA-sequencing (single-cell dataset) shows quite low neighborhood preservation across the embeddings ($\rho_{2D}$). This may be the result of the high intrinsic dimensionality of this dataset, where there are many configurations in which the embedding has a similar UMAP cross-entropy. Another factor may be the larger number of neighbors used for this graph. However, the preservation of nearest neighbors considering the high-dimensional space is high, around $85\%$. This shows that the embeddings from $G$ and $\bar{G}$ preserve the original neighborhood structures similarly, indicating that the divergence between the different layouts lies in the convergence of the UMAP optimization, potentially due to the high intrinsic dimensionality of the RNA-sequencing dataset or the neighborhood size.

\begin{figure}[tbp]
    \centering
    \includegraphics[width=0.72\linewidth]{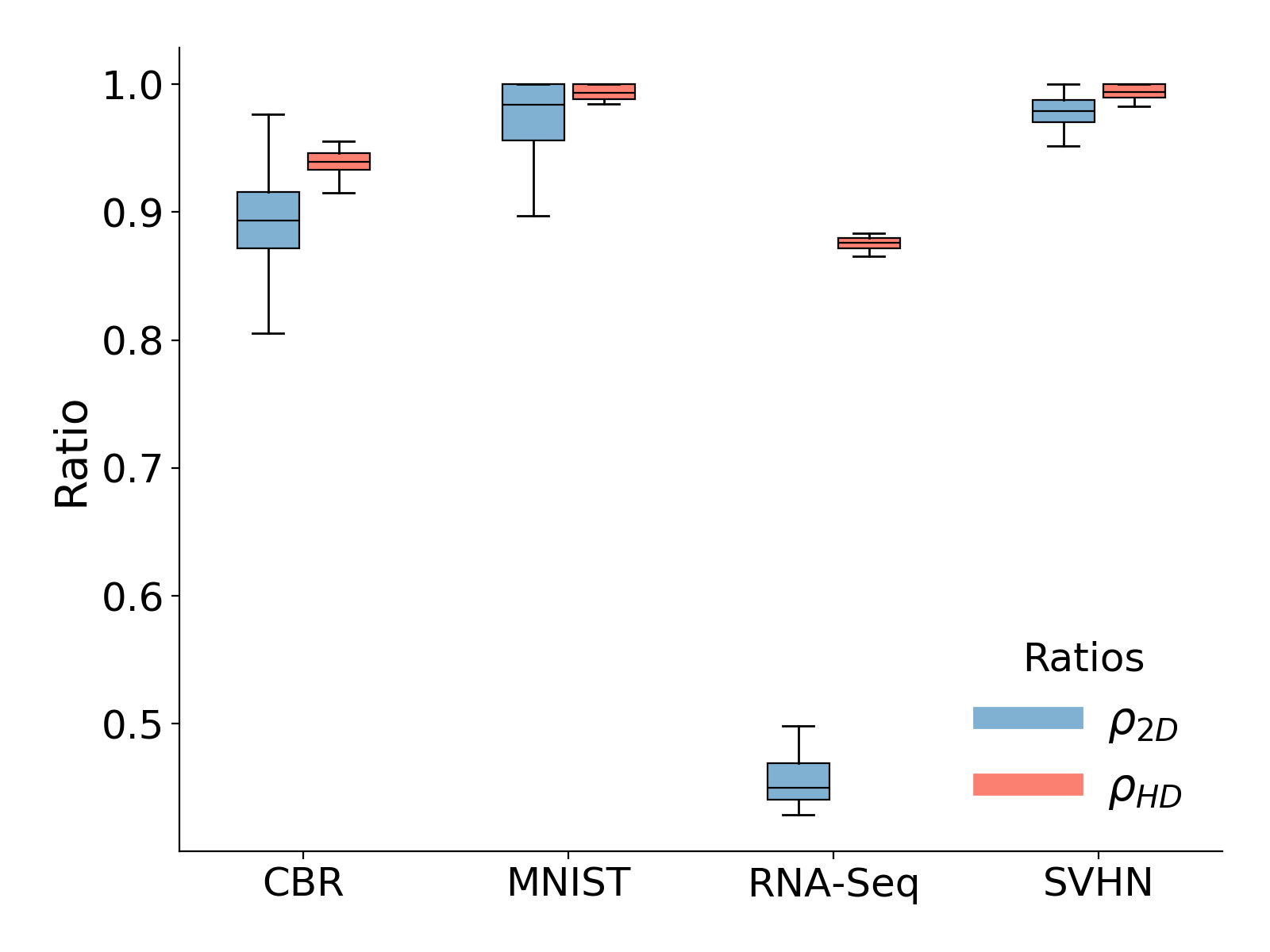}
    \caption{Ratios of $G$ or $\bar{G}$ embeddings vs high dimensional neighborhood preservation.}
    \label{fig:boxplots}
\end{figure}

\vspace{0.20cm}
\noindent\textbf{Standard Metrics Cannot Capture Ambiguity.}
Furthermore, we want to assess whether existing quality metrics can identify ambiguous instances. The most relevant metrics for us are continuity and trustworthiness since they are based on local neighborhoods. We compute these metrics per point for projections generated with the same parameterization as before, but without applying any splits, since these metrics are not designed for multiple points that represent the same instance. \Cref{fig:twcont} shows the results for MNIST, with similar outcomes for the other datasets (see supplemental material). In this figure, all points representing high-dimensional instances in $LAP_r$ $(r=2)$ are colored in light orange. Light blue points are the remaining. We also highlight all points with $|S(v_i)| > 1$ using a triangle marker. From this result, we observe that ambiguous and split instances are spread throughout the distributions for both metrics. Thus, these metrics cannot explicitly identify ambiguous instances.

\begin{figure}[thb]
    \centering
    \includegraphics[width=1.0\linewidth]{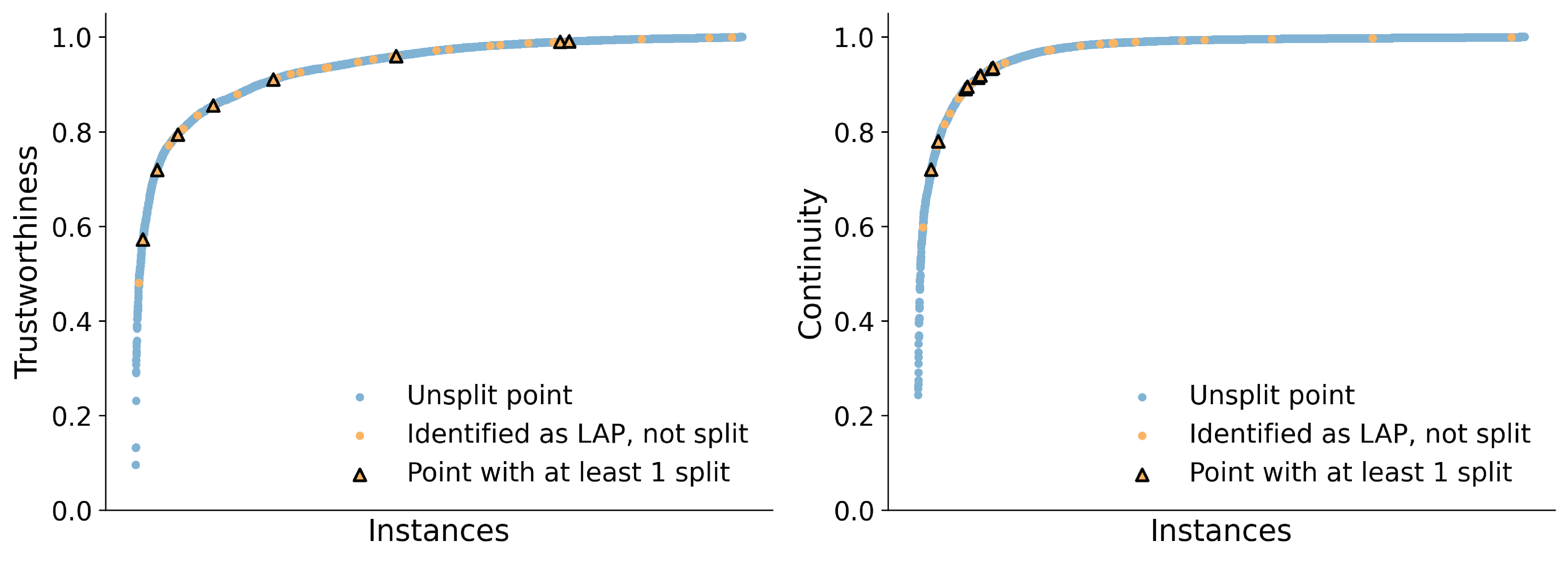}
    \caption{Distribution of trustworthiness and continuity for the examples shown previously. We highlight points representing ambiguous instances (light orange) and those that get split with a triangle marker. Usual DR quality metrics do not capture ambiguous instances.}
    \label{fig:twcont}
\end{figure}


\section{Discussion and Limitations}\label{sec:limitations}
\noindent\textbf{Sparsification. } 
Our approach requires the underlying graph to be relatively sparse, which motivated the introduction of a sparsification step. In addition to the main parameter $\epsilon$, the employed method includes parameters related to the linear solvers used; the full parameterization for the results is listed in the supplemental materials. Although results can vary with different parameter settings, we found the overall behavior to be quite stable with respect to neighborhood consistency. To verify that sparsification does not distort the final embeddings, we applied the quantitative neighborhood-preservation analysis described earlier (see \cref{sec:quant}). We recommend that users of our approach perform the same check. In particular, in our experiments, when very aggressive sparsification was required to obtain any ambiguous splits, the resulting embeddings showed noticeably lower quality, which may indicate that such instances do not naturally occur in the data.  In contrast, in some cases the sparsification step may not be required to identify ambiguous instances if the input graph is sparse (see supplemental material). It may be a direction for future work to automate the parameter selection for the sparsification step based on the neighborhood quality retained.

\vspace{0.20cm}
\noindent\textbf{Ambiguity Definition. } Given our definition, no two ambiguous instances can exist within the same neighborhood. This constraint holds for any LAP in any graph structure, and this is precisely the structure that allows us to find these ambiguous instances. In particular, when multiple instances lie between multiple neighborhoods, they effectively form a neighborhood of their own; since our approach is aimed at identifying individual instances as ambiguous as opposed to entire neighborhoods, we will not identify them as ambiguous. However, alternative approaches, for instance based on community overlap~\cite{10.1145/2433396.2433471}, may identify different types of troublesome instances, suggesting alternative perspectives for future work. In general, while ambiguous instances exist in high-dimensional data, they are not an inherent data property. Rather, the neighborhoods that are constructed depend on the specific technique used to create the relationship graph; using a different technique or parameterization may yield different ambiguous instances. 

Additionally, while our definition excludes true outliers (as ambiguity requires similarity), our detection procedure does not explicitly filter them. In principle, an outlier could appear connected to several neighborhoods depending on graph construction, producing many weak splits that may manifest as visually isolated points due to insufficient attraction. However, we did not observe such cases in our experiments, so analysis of outlier behavior is an open question for future work.

\vspace{0.20cm}
\noindent\textbf{Partial Neighborhood Embedding vs Missing Neighbors.} We define partial neighborhood embedding as an artifact in which an ambiguous instance is placed within only one of its neighborhoods in the visual space. This relates to the well‑known missing neighbors artifact, where a point in the projection is not placed near all of its high‑dimensional neighbors~\cite{martinsVisualAnalysisDimensionality2014, kaski2003trustworthiness}. Partial neighborhood embedding can be seen as a specific subtype of this artifact: while it always involves missing neighbors, not all missing neighbor cases stem from ambiguity; they may also arise from unrelated distortions introduced during mapping. This means that (some) missing neighbor artifacts may be resolved by improving the mapping phase, while our artifact arises from the data. We therefore introduce a distinct name to highlight this more fine‑grained, ambiguity-driven phenomenon. Since current metrics cannot capture this specific artifact (see~\cref{sec:quant}), an opportunity is to develop DR quality metrics that can do so.

\vspace{0.20cm}
\noindent\textbf{Parametrization. } Several parameters, such as the number of neighbors and the distance metric, are external and impact only the construction of the underlying graph. Our approach introduces two remaining parameters: the radius $r$ and the weight-ratio threshold $\tau_w$. In the examples shown in this paper, we used $r=2$ because it is very local yet still requires ambiguity at larger scales. In denser graphs, there are not always splits above $r = 2$. However, when the input graph is already sparse, varying $r$ can be insightful, as it allows us to assess the importance of these instances and gauge the strength of the ambiguity. Instances that are ambiguous (and split) at larger radii connect larger, disjoint neighborhoods than instances at lower radii. We conducted an experiment using the Vispub graph~\cite{Isenberg:2017:VMC}, which contains information about every paper that has appeared at any IEEE VIS venue. In this graph, nodes represent authors, and edges are weighted by the number of papers on which two authors have co‑authored. We applied our approach to this graph, and as it is quite sparse, we skipped the sparsification step. We observed a large number of ambiguous instances at low radii, but as $r$ increased, we progressively isolated highly influential authors whose work spans multiple communities in the field. 

The parameter $\tau_w$ controls how strong the connection between an ambiguous instance and a component (neighborhood) should be to be considered meaningful; increasing $\tau_w$ filters out very weak connections to components and can also reduce spurious splits that appear visually isolated due to insufficient attractive forces, while lower values allow more ambiguous instances in the projection. Another approach is to consider the number of connections to a component; however, this is less suitable in our setting, as our method relies on weighted graphs.

\vspace{0.20cm}
\noindent\textbf{Mapping Phase and Visual Encoding. } The mapping phase should handle graphs with an adjusted structure from vertices being split and their neighbors being distributed. We considered introducing additional edges between split vertices and non-adjacent reachable nodes to account for the lower degree of split vertices, but we did not do so because adding them might create neighborhoods that are less faithful to the original graph structure. In general, projections with many splits will differ from the original projection, but to the best of our knowledge, there is no way to verify these projections using current metrics, since these metrics do not account for ambiguity. This is also a potential direction for future work: developing DR quality metrics that handle instances represented by multiple points. It would also be interesting to investigate whether points that were previously thought to be unstable as a result of the mapping process are actually ambiguous~\cite{jung2025ghostumap2}.

As this study focuses mainly on aspects of ambiguity in the DR process, we opted for simple visual encodings to highlight the points representing ambiguous instances in the embedding. However, further investigation of this represents an important direction for future work.

\section{Conclusion}\label{sec:conclusions}
This work addresses a problem that has received little attention in the field of DR: the presence of \textit{ambiguous instances} in the high-dimensional space and their impact on the produced visual representations, resulting in artifacts we call \textit{partial neighborhood embedding}. Existing DR quality metrics fail to capture this issue because they cannot distinguish between artifacts introduced by the mapping phase and those arising from specific structures present on the data. As a result, data ambiguity and its consequences for DR remain largely unexplored.

Our results show that it is essential to account for ambiguity in DR, especially when interpreting fine-grained local structure, since failing to do so can obscure memberships of multiple neighborhoods. To mitigate this issue, we introduce an approach grounded in established concepts from graph theory, connecting the structural definition of data ambiguity to the resulting misleading visual artifacts. Through a series of examples, we show that our method can reveal additional information not captured by standard DR techniques, and our quantitative analyses further solidify our approach. Therefore, recognizing ambiguity as an important element in interpreting local DR opens up new and interesting opportunities for the analysis and evaluation of high-dimensional data.


\bibliographystyle{abbrv-doi-hyperref}

\bibliography{template}

\appendix 
\crefalias{section}{appendix} 
\end{document}